\theoremstyle{plain}
\newtheorem{pro}{Property}
\newtheorem{thm}{Theorem}
\newtheorem{cor}[thm]{Corollary}
\begin{document}

\title{Optimal Target Assignment and Path Finding\\ for Teams of Agents}

\numberofauthors{2}

\author{
\alignauthor Hang Ma\\
  \affaddr{Department of Computer Science}\\ \affaddr{University of Southern California}\\
  \email{hangma@usc.edu}
\alignauthor  Sven Koenig\\
  \affaddr{Department of Computer Science}\\ \affaddr{University of Southern California}\\
  \email{skoenig@usc.edu}
}

\maketitle

\begin{abstract}
  We study the TAPF (combined target-assignment and path-finding) problem for
  teams of agents in known terrain, which generalizes both the anonymous and
  non-anonymous multi-agent path-finding problems. Each of the teams is given
  the same number of targets as there are agents in the team. Each agent has
  to move to exactly one target given to its team such that all targets are
  visited. The TAPF problem is to first assign agents to targets and then plan
  collision-free paths for the agents to their targets in a way such that the
  makespan is minimized.  We present the CBM (Conflict-Based Min-Cost-Flow)
  algorithm, a hierarchical algorithm that solves TAPF instances optimally by
  combining ideas from anonymous and non-anonymous multi-agent path-finding
  algorithms. On the low level, CBM uses a min-cost max-flow algorithm on a
  time-expanded network to assign all agents in a single team to targets and
  plan their paths. On the high level, CBM uses conflict-based search to
  resolve collisions among agents in different teams. Theoretically, we prove
  that CBM is correct, complete and optimal. Experimentally, we show the
  scalability of CBM to TAPF instances with dozens of teams and hundreds of
  agents and adapt it to a simulated warehouse system.
\end{abstract}


\category{I.2.8}{Artificial Intelligence}{Problem Solving, Control Methods, and Search}[graph and tree search strategies, heuristic methods]
\category{I.2.11}{Artificial Intelligence}{Distributed Artificial Intelligence}[intelligent agents, multi-agent systems]



\terms{Algorithms, Performance, Experimentation}

\begin{CCSXML}
<ccs2012>
<concept>
<concept_id>10010147.10010178.10010199</concept_id>
<concept_desc>Computing methodologies~Planning and scheduling</concept_desc>
<concept_significance>500</concept_significance>
</concept>
<concept>
<concept_id>10010147.10010178.10010199.10010202</concept_id>
<concept_desc>Computing methodologies~Multi-agent planning</concept_desc>
<concept_significance>500</concept_significance>
</concept>
<concept>
<concept_id>10010147.10010178.10010219.10010220</concept_id>
<concept_desc>Computing methodologies~Multi-agent systems</concept_desc>
<concept_significance>500</concept_significance>
</concept>
<concept>
<concept_id>10010147.10010178.10010219.10010221</concept_id>
<concept_desc>Computing methodologies~Intelligent agents</concept_desc>
<concept_significance>500</concept_significance>
</concept>
<concept>
<concept_id>10010147.10010178.10010199.10010200</concept_id>
<concept_desc>Computing methodologies~Planning for deterministic actions</concept_desc>
<concept_significance>300</concept_significance>
</concept>
<concept>
<concept_id>10010147.10010178.10010199.10010204</concept_id>
<concept_desc>Computing methodologies~Robotic planning</concept_desc>
<concept_significance>300</concept_significance>
</concept>
<concept>
<concept_id>10010147.10010178.10010205.10010207</concept_id>
<concept_desc>Computing methodologies~Discrete space search</concept_desc>
<concept_significance>300</concept_significance>
</concept>
<concept>
<concept_id>10010147.10010178.10010213.10010215</concept_id>
<concept_desc>Computing methodologies~Motion path planning</concept_desc>
<concept_significance>300</concept_significance>
</concept>
<concept>
<concept_id>10002950.10003624.10003633.10003644</concept_id>
<concept_desc>Mathematics of computing~Network flows</concept_desc>
<concept_significance>300</concept_significance>
</concept>
<concept>
<concept_id>10003752.10003809.10003635.10010037</concept_id>
<concept_desc>Theory of computation~Shortest paths</concept_desc>
<concept_significance>300</concept_significance>
</concept>
<concept>
<concept_id>10010520.10010553.10010554.10010557</concept_id>
<concept_desc>Computer systems organization~Robotic autonomy</concept_desc>
<concept_significance>300</concept_significance>
</concept>
</ccs2012>
\end{CCSXML}

\ccsdesc[500]{Computing methodologies~Planning and scheduling}
\ccsdesc[500]{Computing methodologies~Multi-agent planning}
\ccsdesc[500]{Computing methodologies~Multi-agent systems}
\ccsdesc[500]{Computing methodologies~Intelligent agents}
\ccsdesc[300]{Computing methodologies~Planning for deterministic actions}
\ccsdesc[300]{Computing methodologies~Robotic planning}
\ccsdesc[300]{Computing methodologies~Discrete space search}
\ccsdesc[300]{Computing methodologies~Motion path planning}
\ccsdesc[300]{Mathematics of computing~Network flows}
\ccsdesc[300]{Theory of computation~Shortest paths}
\ccsdesc[300]{Computer systems organization~Robotic autonomy}


\keywords{heuristic search; Kiva (Amazon Robotics) systems; multi-agent path
  finding; multi-robot path finding; network flow; path planning; robotics;
  target assignment; team work; warehouse automation}

\section{Introduction}

Teams of agents often have to assign targets among themselves and then plan
collision-free paths to their targets. Examples include autonomous aircraft
towing vehicles~\cite{airporttug16}, automated warehouse systems~\cite{kiva},
office robots~\cite{DBLP:conf/ijcai/VelosoBCR15} and game characters in video
games \cite{WHCA}. For example, in the near future, autonomous aircraft towing
vehicles might tow aircraft all the way from the runways to their gates (and
vice versa), reducing pollution, energy consumption, congestion and human
workload. Today, autonomous warehouse robots already move inventory pods all
the way from their storage locations to the inventory stations that need the
products they store (and vice versa), see Figure \ref{kiva}.

\begin{figure}
  \centering
  \includegraphics[width=\columnwidth]{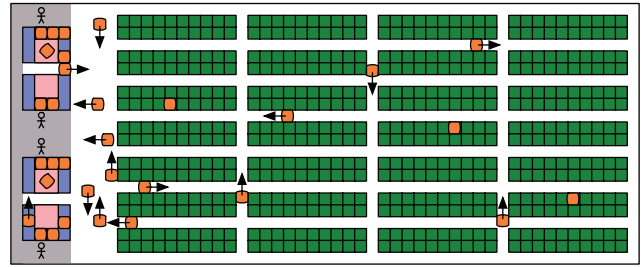}
  \caption{A typical Kiva warehouse system~\protect\cite{kiva}.}
  \label{kiva}
\end{figure}

We therefore study the {\bf TAPF (combined target-assignment and path-finding)
  problem} for teams of agents in known terrain. The agents are partitioned
into teams. Each team is given the same number of unique targets (goal
locations) as there are agents in the team. The TAPF problem is to assign
agents to targets and plan collision-free paths for the agents from their
current locations to their targets in a way such that each agent moves to
exactly one target given to its team, all targets are visited and the makespan
(the earliest time step when all agents have reached their targets and stop
moving) is minimized. Any agent in a team can be assigned to a target of the
team, and the agents in the same team are thus exchangeable. However, agents
in different teams are not exchangeable.

\subsection{Related Work}

The TAPF problem generalizes the anonymous and non-anonymous MAPF (multi-agent
path-finding) problems:

\begin{itemize}

\item The {\bf anonymous MAPF problem} (sometimes called goal-invariant MAPF
  problem) results from the TAPF problem if only one team exists (that
  consists of all agents). It is called ``anonymous'' because any agent can be
  assigned to a target, and the agents are thus exchangeable. The anonymous
  MAPF problem can be solved optimally in polynomial time
  \cite{YuLav13STAR}. Anonymous MAPF solvers use, for example, the
  polynomial-time {\bf max-flow algorithm} on a time-expanded network
  \cite{YuLav13STAR} (an idea that originated in the operations research
  literature~\cite{SurveyDinaymicFlow}) or graph-theoretic algorithms
  \cite{AAAI15-MacAlpine}.

\item The {\bf non-anonymous MAPF problem} (often just called MAPF problem)
  results from the TAPF problem if every team consists of exactly one agent
  and the number of teams thus equals the number of agents. It is
  called ``non-anonymous'' because only one agent can be assigned to a target
  (meaning that the assignments of agents to targets are pre-determined), and
  the agents are thus non-exchangeable.  The non-anonymous MAPF problem is
  NP-hard to solve optimally and even NP-hard to approximate within any
  constant factor less than 4/3 \cite{MaAAAI16}. Non-anonymous MAPF solvers
  use, for example, reductions to problems from satisfiability, integer linear
  programming or answer set programming
  ~\cite{YuLav13ICRA,erdem2013general,Surynek15} or optimal, bounded
  suboptimal or suboptimal search algorithms
  \cite{ODA,EPEJAIR,MStar,DBLP:journals/ai/SharonSGF13,ICBS,PushAndSwap,PushAndRotate,ECBS,DBLP:conf/socs/CohenUK15},
  such as the optimal {\bf CBS (conflict-based search) algorithm}
  \cite{DBLP:journals/ai/SharonSFS15}.

\end{itemize}

Research so far has concentrated on these two extreme cases. Yet, many
real-world applications fall between the extreme cases because the number of
teams is larger than one but smaller than the number of agents, which is why
we study the TAPF problem in this paper. The TAPF problem is NP-hard to solve
optimally and even NP-hard to approximate within any constant factor less than
4/3 if more than one team exists \cite{MaAAAI16}. It is unclear how to
generalize anonymous MAPF algorithms to solving the TAPF problem.
Straightforward ways of generalizing non-anonymous MAPF algorithms to solving
the TAPF problem have difficulties with either scalability (due to the
resulting large state spaces), such as searching over all assignments of
agents to targets to find optimal solutions, or solution quality, such as
assigning agents to targets with algorithms such as
\cite{Tovey2005,ZhengIJCAI} and then planning collision-free paths for the
agents with non-anonymous MAPF algorithms (perhaps followed by improving the
assignment and iterating \cite{WagnerSoCS12}) to find sub-optimal solutions.

\subsection{Contribution}

We present the {\bf CBM (Conflict-Based Min-Cost-Flow) algorithm} to bridge
the gap between the extreme cases of anonymous and non-anonymous MAPF
problems. CBM solves the TAPF problem optimally by simultaneously assigning
agents to targets and planning collision-free paths for them, while utilizing
the polynomial-time complexity of solving the anonymous MAPF problem for all
agents in a team to scale to a large number of agents. CBM is a hierarchical
algorithm that combines ideas from anonymous and non-anonymous MAPF
algorithms. It uses CBS on the high level and a min-cost max-flow algorithm
\cite{Successive} on a time-expanded network on the low level. Theoretically,
we prove that CBM is correct, complete and optimal. Experimentally, we show
the scalability of CBM to TAPF instances with dozens of teams and hundreds of
agents and adapt it to a simulated warehouse system.

\section{TAPF}

In this section, we formalize the TAPF problem and show how it can be solved
via a reduction to the integer multi-commodity flow problem on a
time-expanded network.

\subsection{Definition and Properties}

For a TAPF instance, we are given an undirected connected graph $G = (V,E)$
(whose vertices $V$ correspond to locations and whose edges $E$ correspond to
ways of moving between locations) and $K$ teams $team_1 \ldots team_K$. Each
{\em team} $team_i$ consists of $K_i$ agents $a^i_1 \ldots a^i_{K_i}$. Each
agent $a^i_j$ has a unique {\em start vertex} $s^i_j$. Each team $team_i$ is
given unique {\em targets} (goal vertices) $g^i_1 \ldots g^i_{K_i}$. Each
agent $a^i_j$ must move to a unique target $g^i_{j'}$. An {\em assignment} of
agents in team $team_i$ to targets is thus a one-to-one mapping $\varphi^i$,
determined by a permutation on $1\ldots K_i$, that maps each agent $a^i_j$ in
$team_i$ to a unique target $g^i_{j'} = \varphi^i(a^i_j)$ of the same team. A
{\em path} for agent $a^i_j$ is given by a function $l^i_j$ that maps each
integer time step $t = 0\ldots\infty$ to the vertex $l^i_j(t) \in V$ of the
agent in time step $t$. A {\em solution} consists of paths for all agents that
obey the following conditions:

\begin{enumerate}

\item $\forall i, j:~l^i_j(0) = s^i_j$ (each agent starts at its start vertex);

\item $\forall i, j~\exists \mbox{a minimal}~T^i_j~\forall t \geq
  T^i_j:~l^i_j(t) = \varphi^i(a^i_j)$ (each agent ends at its target);

\item $\forall i, j, t:~(l^i_j(t) = l^i_j(t+1)$ or $(l^i_j(t), l^i_j(t+1))\in
  E)$ (each agent always stays at its current vertex or moves to an adjacent
  vertex);

\item $\forall a^i_j, a^{i'}_{j'}, t~\mbox{with}~a^i_j \neq a^{i'}_{j'}:
  l^i_j(t)\neq l^{i'}_{j'}(t)$ (there are no vertex collisions since different
  agents never occupy the same vertex at the same time);

\item $\forall a^i_j, a^{i'}_{j'}, t~\mbox{with}~a^i_j \neq a^{i'}_{j'}:
  (l^i_j(t) \neq l^{i'}_{j'}(t+1)$ or $l^{i'}_{j'}(t) \neq l^i_j(t+1))$ (there
  are no edge collisions since different agents never move along the same edge
  in different directions at the same time).

\end{enumerate}

Given paths for all agents in team $team_i$, the {\em team cost} of team
$team_i$ is $\max_{j}T^i_j$ (the earliest time step when all agents in the
team have reached their targets and stop moving). Given paths for all agents,
the {\em makespan} is $\max_{i,j} T^i_j$ (the earliest time step when all
agents have reached their targets and stop moving). The task is to find an
{\em optimal solution}, namely one with minimal makespan. Note that a
(non-anonymous) MAPF instance can be obtained from a TAPF instance by fixing
the assignments of agents to targets. Any solution of a TAPF instance is thus
also a solution of a (non-anonymous) MAPF instance on the same graph for a
suitable assignment of agents to targets. Since the makespan of any optimal (non-anonymous) MAPF solution is bounded by $O(|V|^3)$
\cite{YuR14}, the makespan of any optimal TAPF solution is also
bounded by $O(|V|^3)$.

We define a collision between an agent agent $a^i_j$ in team $team_i$ and a
different agent $a^{i'}_{j'}$ in team $team_{i'}$ to be either a {\em vertex
  collision} ($team_i$, $team_{i'}$, $l$, $t$) [if $l = l^i_j(t) =
l^{i'}_{j'}(t)$ and thus both agents occupy the same vertex at the same time]
or an {\em edge collision} ($team_i$, $team_{i'}$, $l_1$, $l_2$, $t$) [if $l_1
= l^i_j(t) = l^{i'}_{j'}(t+1)$ and $l_2 = l^{i'}_{j'}(t) = l^i_j(t+1)$ and
thus both agents move along the same edge in different directions at the same
time].  Likewise, we define a constraint to be either a {\em vertex
  constraint} ($team_i$, $l$, $t$) [that prohibits any agent in $team_i$ from
occupying vertex $l$ in time step $t$] or an {\em edge constraint} ($team_i$,
$l_1$, $l_2$, $t$) [that prohibits any agent in team $team_i$ from moving from
vertex $l_1$ to vertex $l_2$ between time steps $t$ and $t+1$].

\subsection{Solution via Reduction to Flow Problem}
 \label{TAPF and Network Flow}

 \begin{figure}
  \centering
  \includegraphics[width=\columnwidth]{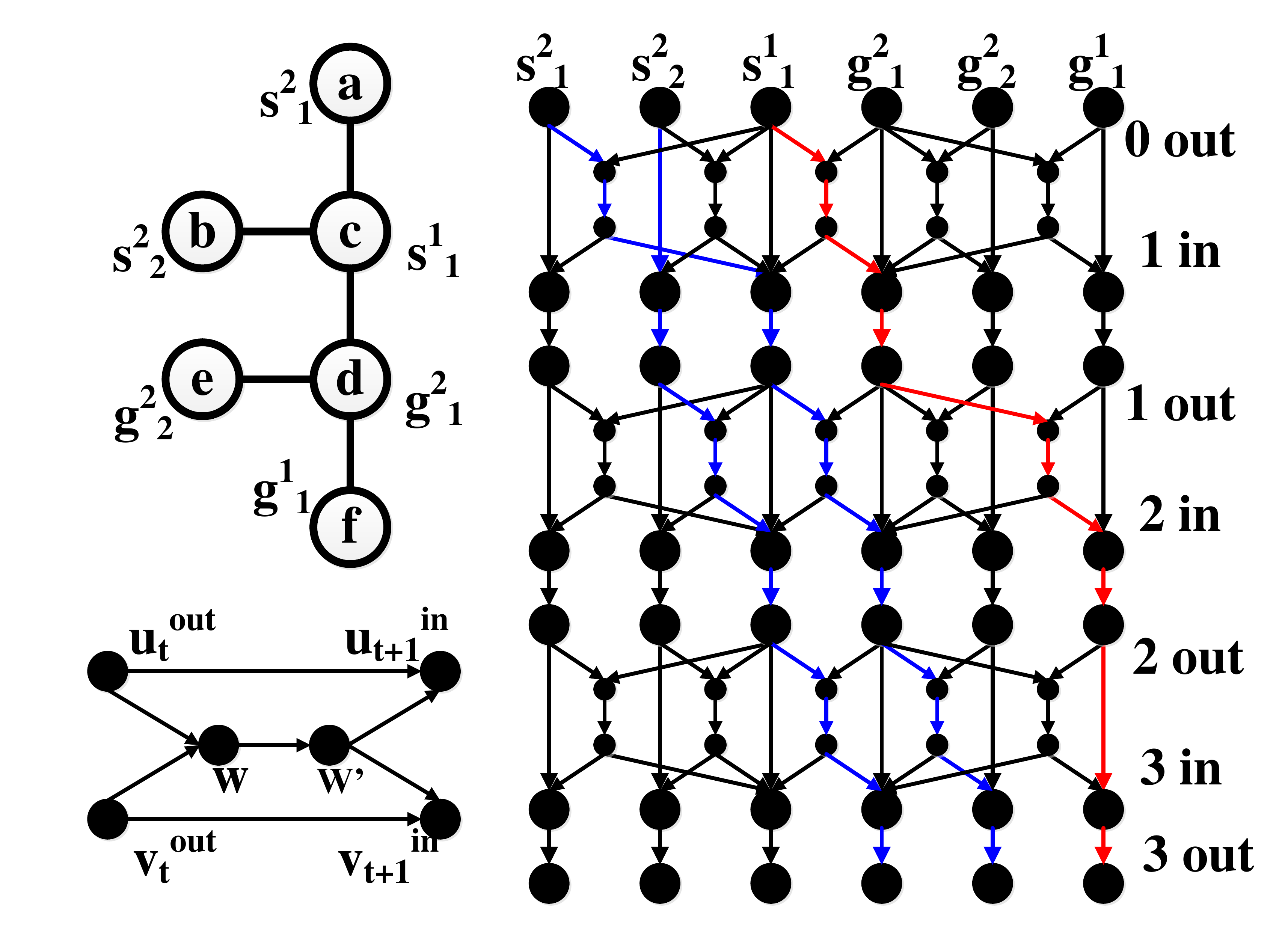}
  \caption{An example of constructing and finding a feasible integer
    multi-commodity flow for a 3-step time-expanded network. The TAPF instance
    consists of two teams. Team 1 consists of agent $\{a^1_1\}$ with target
    $\{g^1_1\}$. Team 2 consists of agents $\{a^2_1, a^2_2\}$ with targets
    $\{g^2_1, g^2_2\}$. The red edges represents a flow for commodity type 1,
    which corresponds to a path for the agent in Team 1. The blue edges
    represent a flow for commodity type 2, which corresponds to paths for the
    two agents in Team 2. The flow thus corresponds to an assignment of agent
    $a^1_1$ to target $g^1_1$, agent $a^2_1$ to target $g^2_2$ and agent
    $a^2_2$ to target $g^2_1$ as well as the (optimal) solution $\{\langle
    c,d,f,f\rangle\}$ and $\{\langle a,c,d,e\rangle, \langle
    b,b,c,d\rangle\}$.}\label{extend}
\end{figure}

Given a TAPF instance on undirected {\em graph} $G=(V,E)$ and a limit $T$ on
the number of time steps, we construct a $T$-step time-extended network using
a reduction that is similar to that from the (non-anonymous) MAPF problem to
the integer multi-commodity flow problem~\cite{YuLav13ICRA} (the idea of which
is an extension of \cite{YuLav13STAR}).  A {\em $T$-step time-extended
  network} is a directed network $\mathcal{N} = (\mathcal{V},\mathcal{E})$
with vertices $\mathcal{V}$ and directed edges $\mathcal{E}$ that have unit
capacity. Each vertex $v \in V$ is translated to a vertex $v_t^{out} \in
\mathcal{V}$ for all $t=0\ldots T$ (which represents vertex $v$ at the end of
time step $t$) and a vertex $v_t^{in} \in \mathcal{V}$ for all $t=1\ldots T$
(which represents vertex $v$ in the beginning of time step $t$). There is a
supply of one unit of commodity type $i$ at vertex ${(s^i_j)}_0^{out}$ and a
demand of one unit of commodity type $i$ at vertex ${(g^i_j)}^{out}_T$ for all
$i = 1\ldots K$ and $j = 1\ldots K_i$. Each vertex $v \in V$ is also
translated to an edge $(v_t^{out},v_{t+1}^{in}) \in \mathcal{E}$ for all
$t=0\ldots T-1$ (which represents an agent staying at vertex $v$ between time
steps $t$ and $t+1$).  Each vertex $v \in V$ is also translated to an edge
$(v_t^{in},v_t^{out}) \in \mathcal{E}$ for all $t=1\ldots T$ (which prevents
vertex collisions of the form $(*, *, v, t)$ among all agents since only one
agent can occupy vertex $v$ between time steps $t$ and $t+1$). Each edge
$(u,v) \in E$ is translated to a {\em gadget} of vertices in $\mathcal{V}$ and
edges in $\mathcal{E}$ for all $t=0\ldots {T-1}$, which consists of two
auxiliary vertices $w, w' \in \mathcal{V}$ that are unique to the gadget (but
have no subscripts here for ease of readability) and the edges $(u_t^{out},w),
(v_t^{out},w), (w,w'), (w',u_{t+1}^{in}), (w,v_{t+1}^{in}) \in
\mathcal{E}$. This gadget prevents edge collisions of the forms $(*, *, u, v,
t)$ and $(*, *, v, u, t)$ among all agents since only one agent can move along
the edge $(u,v)$ in any direction between time steps $t$ and $t+1$.
Figure~\ref{extend} shows a simple example. The following theorem holds by
construction and can be proved in a way similar to the one for the reduction
of the (non-anonymous) MAPF problem to the integer multi-commodity flow
problem \cite{YuLav13ICRA}:

\begin{thm}
\label{makespan and multiflow}
There is a correspondence between all feasible integer multi-commodity flows
on the $T$-step time-extended network of a number of unit that equals the
number of agents and all solutions of the TAPF instance with makespans of at
most $T$.
\end{thm}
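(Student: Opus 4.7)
The plan is to establish the correspondence as a pair of explicit constructions. First, I would show that any TAPF solution with makespan at most $T$ induces a feasible integer multi-commodity flow of the stated magnitude; second, I would show that any such flow can be decomposed into a TAPF solution. The setup is directly parallel to the non-anonymous MAPF reduction in \cite{YuLav13ICRA}, so the novelty is mainly in tracking the commodity type (team index) through the construction.

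For the forward direction, I would take each agent $a^i_j$ with path $l^i_j$ and push one unit of commodity $i$ from ${(s^i_j)}_0^{out}$ to ${(\varphi^i(a^i_j))}_T^{out}$ by transcribing the path step by step. A ``wait'' at $v$ between time $t$ and $t+1$ is routed through the edges $(v_t^{out},v_{t+1}^{in})$ and $(v_{t+1}^{in},v_{t+1}^{out})$, while a move from $u$ to $v$ is routed through the gadget of $(u,v)$ via $(u_t^{out},w),(w,w'),(w',v_{t+1}^{in})$ and then $(v_{t+1}^{in},v_{t+1}^{out})$. After the agent reaches its target at some $T^i_j \leq T$, I continue the flow along the self-loop $(g_t^{out},g_{t+1}^{in})\to(g_{t+1}^{in},g_{t+1}^{out})$ until $t=T$. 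Flow conservation is immediate; unit capacity on each $(v_t^{in},v_t^{out})$ is preserved precisely because the TAPF solution has no vertex collisions at $(v,t)$, and unit capacity on each gadget's middle edge $(w,w')$ is preserved precisely because it has no edge collisions across $\{u,v\}$ at time $t$.

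For the reverse direction, I would apply standard integer flow decomposition per commodity. Because all capacities are integral and the supplies and demands at each source/sink are exactly one unit, the commodity-$i$ flow splits into exactly $K_i$ unit-flow paths, each starting at some ${(s^i_j)}_0^{out}$ and ending at some ${(g^i_{j'})}_T^{out}$. This immediately yields an assignment $\varphi^i(a^i_j) := g^i_{j'}$, and it is a bijection because every target of team $i$ carries demand exactly one. Reading the sequence of original-graph vertices off each decomposed path (recording $v$ whenever the path visits some $v_t^{out}$) defines $l^i_j(t)$; the gadget structure forces every time-layer transition to correspond to either a wait or the full traversal of an edge $(u,v)\in E$, so condition~(3) is met.

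The main obstacle is verifying conditions~(4) and~(5) in the reverse direction, i.e.\ that distinct decomposed paths do not collide. A shared vertex collision at $(v,t)$ would force two commodity units to traverse $(v_t^{in},v_t^{out})$, violating its unit capacity in the multi-commodity sense (capacities bound the aggregate across all commodities), and a swap across $\{u,v\}$ at time $t$ would force two commodity units to traverse $(w,w')$ of the same gadget, again violating unit capacity. I would emphasize that both of these capacity arguments apply uniformly whether the colliding agents belong to the same team or to different teams, which is what makes the multi-commodity formulation faithful to the TAPF problem. Once this is checked, the makespan bound follows because the last layer of the network is time $T$ and every agent reaches a target-labeled vertex there.
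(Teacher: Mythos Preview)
Your proposal is correct and follows essentially the same approach as the paper: the paper states that the theorem holds ``by construction'' and defers to the analogous MAPF reduction in \cite{YuLav13ICRA}, and the more detailed argument it later spells out for Property~\ref{p1} (the single-team version) matches your two-direction construction almost verbatim. Your write-up is in fact more explicit than what the paper provides for Theorem~\ref{makespan and multiflow} itself, but the underlying ideas---routing each agent's path through self-loops and gadgets (extending to time~$T$ by waiting at the target) in one direction, and per-commodity integer flow decomposition together with the unit-capacity bottlenecks $(v_t^{in},v_t^{out})$ and $(w,w')$ enforcing vertex- and edge-collision-freeness in the other---are identical.
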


An optimal solution can therefore be found by starting with $T=0$ and
iteratively checking for increasing values of $T$ whether a {\bf feasible
  integer multi-commodity flow} of a number of units that equals the number of
agents exists for the corresponding $T$-step time-expanded network (which is
an NP-hard problem), until an upper bound on $T$ is reached (such as the one
provided in~\cite{YuR14}). Each $T$-step time-expanded network is translated
in the standard way into an ILP (integer linear program), which is then solved
with an ILP algorithm. We evaluate this ILP-based TAPF solver experimentally
in Section~\ref{Comparisons}. The anonymous MAPF problem results from the TAPF
problem if only one team exists (that consists of all agents). The following
corollary thus follows from \cite{YuLav13STAR}:

\begin{cor}
  The TAPF problem can be solved optimally in polynomial time if only one team
  exists.
\end{cor}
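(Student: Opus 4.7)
The plan is to observe that when only one team exists, the TAPF instance coincides exactly with an anonymous MAPF instance: any of the $K_1$ agents may be assigned to any of the $K_1$ targets. Under the reduction of Theorem~\ref{makespan and multiflow}, this means the integer multi-commodity flow problem on the $T$-step time-extended network collapses to a single-commodity problem. I would make the reduction explicit by attaching a super-source $\sigma$ with a unit-capacity edge to every ${(s^1_j)}_0^{out}$ and a super-sink $\tau$ with a unit-capacity edge from every ${(g^1_{j'})}_T^{out}$, and then asking for a maximum integer $\sigma$--$\tau$ flow of value $K_1$. Any such flow decomposes into $K_1$ edge-disjoint unit paths which, by Theorem~\ref{makespan and multiflow}, decode directly into a collision-free joint plan together with an assignment of agents to targets.

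The feasibility check at a fixed $T$ is polynomial: the time-extended network $\mathcal{N}$ has size polynomial in $|V|$, $|E|$, and $T$, all capacities are integral, and standard max-flow algorithms (e.g.\ Edmonds--Karp) return an integral optimum, so we can test in polynomial time whether the maximum $\sigma$--$\tau$ flow equals $K_1$. This is the point at which the single-team assumption is essential, since it is precisely the multi-commodity version of the problem (i.e.\ more than one team) that makes the integrality requirement NP-hard.

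To obtain an optimal makespan I would iterate $T = 0, 1, 2, \ldots$ (or binary search over $T$) and return the smallest $T$ for which the max-flow test succeeds. The step that needs care is bounding this iteration: as noted right above the corollary, any optimal TAPF solution has makespan $O(|V|^3)$ by the bound of~\cite{YuR14}, so at most $O(|V|^3)$ max-flow computations are performed, each of polynomial size. The overall running time is therefore polynomial in $|V|$ and $|E|$, which yields the corollary. I do not anticipate a genuine obstacle here; the only point worth stating carefully is the collapse from the NP-hard multi-commodity formulation to a tractable single-commodity max-flow, which is exactly what the one-team assumption buys us and matches the polynomial-time result of~\cite{YuLav13STAR}.
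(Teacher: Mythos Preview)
Your proposal is correct and follows the same route the paper takes: with a single team the TAPF instance is exactly an anonymous MAPF instance, so the integer multi-commodity flow of Theorem~\ref{makespan and multiflow} degenerates to a single-commodity max-flow on the $T$-step time-expanded network, which is solvable in polynomial time for each $T$ and needs only polynomially many values of $T$ by the $O(|V|^3)$ makespan bound. The paper simply cites~\cite{YuLav13STAR} for this fact rather than spelling out the super-source/super-sink construction and the iteration over $T$, but your more explicit argument is precisely the content of that reference.
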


\section{Conflict-Based Min-Cost Flow}

In this section, we present the {\bf CBM (Conflict-Based Min-Cost-Flow)
  algorithm}, a hierarchical algorithm that solves TAPF instances optimally.
On the high level, CBM considers each team to be a meta-agent. It uses CBS to
resolve collisions among meta-agents, that is, agents in different teams. CBS
is a form of best-first search on a tree, where each node contains a set of
constraints and paths for all agents that obey these constraints, move all
agents to unique targets of their teams and result in no collisions among
agents in the same team. On the low level, CBM uses a polynomial-time min-cost
max-flow algorithm \cite{Successive} on a time-expanded network to assign all
agents in a single team to unique targets of the same team and plan paths for
them that obey the constraints imposed by the currently considered high-level
node and result in no collisions among the agents in the team. Since the running
time of CBS on the high level can be exponential in the number of collisions
that need to be resolved \cite{DBLP:journals/ai/SharonSFS15}, CBM uses edge
weights on the low level to bias the search so as to reduce the possibility of
creating collisions with agents in different teams.

The idea of biasing the search on the low level has been used before for
solving the (non-anonymous) MAPF problem with CBS \cite{ECBS}. Similarly, the
idea of grouping some agents into a meta-agent on the high level and planning
paths for each group on the low level has been used before for solving the
(non-anonymous) MAPF problem with CBS \cite{DBLP:journals/ai/SharonSFS15} but
faces the difficulty of having to identify good groups of agents. The best way
to group agents can often be determined only experimentally and varies
significantly among MAPF instances. On the other hand, grouping all agents in
a team into a meta-agent for solving the TAPF problem is a natural way of
grouping agents since the assignments of agents in the same team to targets
and their paths strongly depend on each other and should therefore be planned
together on the low level. For example, if an agent is assigned to a different
target, then many of the agents in the same team typically need to be assigned
to different targets as well and have their paths re-planned. Also, the lower
level can then use a polynomial-time max-flow algorithm on a time-expanded
network to assign all agents in a single team to targets and find paths for
them due to the polynomial-time complexity of the corresponding anonymous MAPF
problem.

\subsection{High-Level Search of CBM}
\begin{algorithm}[t]
\small
\caption{High-Level Search of CBM}
\label{High-Level Search}
    $Root.constraints \gets \emptyset$\;
    $Root.paths \gets \emptyset$\;
    \For{\textnormal{\textbf{each} $team_i$}}
    {
      \If{\textnormal{Lowlevel($team_i$, $Root$) returns no paths}}
      {
        \Return No solution exists\;
      }
      Add the returned paths to $Root.paths$\;
    }
    $Root.key \gets$ Makespan($Root.paths$)\;
    $Priorityqueue \gets \{Root\}$\;
    \While{$Priorityqueue \neq \emptyset$}
    {
        $N \gets Priorityqueue.pop()$\;
        \If{\textnormal{Findcollisions($N.paths$) returns no collisions}}
        {
            \Return Solution is $N.paths$\;
        }
        $Collision \gets$ earliest collision found\;
        \For{\textnormal{\textbf{each} $team_i$ involved in $Collision$}}
        {
            $N'\gets$ Newnode() /* with parent $N$ */\;
            $N'.constraints \gets N.constraints$\;
            $N'.paths \gets N.paths$\;
            Add one new constraint for $team_i$ to $N'.constraints$\;
            \If{\textnormal{Lowlevel($team_i$, $N'$) returns paths}}
            {
                Update $N'.paths$ with the returned paths\;
                $N'.key \gets$ $max$(Makespan($N'.paths$), $N.key$)\;
                $Priorityqueue.insert(N')$\;
            }
        }
    }
    \Return No solution exists\;
\end{algorithm}

On the high level, CBM performs a best-first search on a binary tree, see
Algorithm~\ref{High-Level Search}. Each node $N$ contains constraints
$N.constraints$ and paths for all agents $N.paths$ that obey these
constraints, move all agents to unique targets of their teams and result in no
collisions among agents in the same team. All nodes are stored in a priority
queue. The priority queue initially consists of only the root node $Root$ with
no constraints and paths for all agents that move all agents to unique targets
of their teams, result in no collisions among agents in the same team and
minimize the team cost of each team [Lines 1-8]. If the priority queue is
empty, then CBM terminates unsuccessfully [Line 23]. Otherwise, CBM always
chooses a node $N$ in the priority queue with the smallest key [Line 10]. The
key of a node is the makespan of its paths. (Ties are broken in favor of the
node whose paths have the smallest number of colliding teams.) If the paths of
node $N$ have no colliding agents, then they are a solution and CBM terminates
successfully with these paths [Lines 11-12].  Otherwise, CBM determines all
collisions between two agents (which have to be in different teams) and then
resolves a collision $Collision$ whose time step $t$ is smallest [Line
13]. (We have evaluated different ways of prioritizing the collisions,
including the one suggested in \cite{ICBS}, but have not observed significant
differences in the resulting running times of CBM.) Let the two colliding
agents be in $team_i$ and $team_{i'}$. CBM then generates two child nodes
$N_1$ and $N_2$ of node $N$, both of which inherit the constraints and paths
from their parent node [Lines 15-17]. If the collision is a vertex collision
$(team_i, team_{i'}, l, t)$ or, equivalently, $(team_{i'}, team_i, l, t)$,
then CBM adds the vertex constraint $(team_i, l, t)$ to the constraints of
node $N_1$ and the vertex constraint $(team_{i'}, l, t)$ to the constraints of
node $N_2$ [Line 18]. If the collision is an edge collision $(team_i,
team_{i'}, l_1, l_2, t)$ or, equivalently, $(team_{i'}, team_i, l_2, l_1, t)$,
then CBM adds the edge constraint $(team_i, l_1, l_2, t)$ to the constraints
of node $N_1$ and the edge constraint $(team_{i'}, l_2, l_1, t)$ to the
constraints of node $N_2$ [Line 18]. For each of the two new nodes, say node
$N_1$, the low-level search is called to assign all agents in team $team_i$ to
unique targets of the same team and find paths for them that obey the
constraints of node $N_1$ and result in no collisions among the agents in the
team. If the low-level search successfully returns such paths, then CBM
updates the paths of node $N_1$ by replacing the paths of all agents in team
$team_i$ with the returned ones, updates the key of node $N_1$ and inserts it
into the priority queue [Lines 19-22]. Otherwise, it discards the node.

\subsection{Low-Level Search of CBM}

On the low level, Lowlevel($team_i$,$N$) assigns all agents in team $team_i$
to unique targets of the same team and finds paths for them that obey all
constraints of node $N$ (namely all vertex constraints of the form ($team_i$,
*, *) and all edge constraints of the form ($team_i$, *, *, *)) and result in
no collisions among the agents in the team.

Given a limit $T$ on the number of time steps, CBM constructs the $T$-step
time-expanded network from Section \ref{TAPF and Network Flow} with the
following changes: a) There is only a single commodity type $i$ since CBM
considers only the single team $team_i$. There is a supply of one unit of this
commodity type at vertex ${(s^i_j)}_0^{out}$ and a demand of one unit of this
commodity type at vertex ${(g^i_j)}^{out}_T$ for all $j = 1\ldots K_i$. b) To
obey the vertex constraints, CBM removes the edge $(l^{in}_t, l^{out}_t)$ from
$\mathcal{E}$ for each vertex constraint of the form $(team_i, l, t)$. c) To
obey the edge constraints, CBM removes the edges $((l_1)^{out}_t, w)$ and
$(w', (l_2)^{in}_{t+1})$ from $\mathcal{E}$ for all gadgets that correspond to
edge $(l_1, l_2) \in E$ for each edge constraint of the form $(team_i, l_1,
l_2, t)$. Let $\mathcal{V} = \mathcal{V'}$ be the set of (remaining) vertices
and $\mathcal{E'}$ be the set of remaining edges.

Similar to the procedure from Section \ref{TAPF and Network Flow}, CBM
iteratively checks for increasing values of $T$ whether a feasible integer
single-commodity flow of $K_i$ units exists for the corresponding $T$-step
time-expanded network, which can be done with the polynomial-time max-flow
algorithm that finds a {\bf feasible maximum flow}. CBM can start with $T$
being the key of the parent node of node $N$ since it is a lower bound on the
new key of node $N$ due to Line 21. (For $N = Root$, CBM starts with $T=0$.)
During the earliest iteration when the max-flow algorithm finds a feasible
flow of $K_i$ units, the call returns successfully with the paths for the
agents in the team that correspond to the flow. If $T$ reaches an upper bound
on the makespan of an optimal solution (such as the one provided
in~\cite{YuR14}) and no feasible flow of $K_i$ units was found, then the call
returns unsuccessfully with no paths.

CBM actually implements Lowlevel($team_i$,$N$) in a more sophisticated way to
avoid creating collisions between agents in team $team_i$ and agents in other
teams by adding edge weights to the $T$-step time-expanded network. CBM sets
the weights of all edges in $\mathcal{E'}$ to zero initially and then modifies
them as follows: a) To reduce vertex collisions, CBM increases the weight of
edge $(v^{in}_t, v^{out}_t) \in \mathcal{E'}$ by one for each vertex $v =
l^{i'}_{j'}(t) \in V$ in the paths of node $N$ with $i' \neq i$ to reduce the
possibility of an agent of team $team_i$ occupying the same vertex at the same
time step as an agent from a different team. b) To reduce edge collisions, CBM
increases the weight of edge $(v^{in}_t, w) \in \mathcal{E'}$ by one for each
edge $(u = l^{i'}_{j'}(t), v = l^{i'}_{j'}(t+1)) \in E$ in the paths of node
$N$ with $i' \neq i$ (where $w$ is the auxiliary vertex of the gadget that
corresponds to edge $(u,v)$ and time step $t$) to reduce the possibility of an
agent of team $team_i$ moving along the same edge in a different direction but
at the same time step as an agent from a different team.

CBM uses the procedure described above, except that it now uses a min-cost
max-flow algorithm (instead of a max-flow algorithm) that finds a {\bf flow of
  minimal weight among all feasible maximal flows}. In particular, it uses the
successive shortest path algorithm \cite{Successive}, a generalization of the
Ford-Fulkerson algorithm that uses Dijkstra's algorithm to find a path of
minimal weight for one unit of flow. The complexity of the successive shortest
path algorithm is $O(U(|\mathcal{E'}| + |\mathcal{V'}|\log|\mathcal{V'}|))$,
where $O(|\mathcal{E'}| + |\mathcal{V'}|\log|\mathcal{V'}|)$ is the complexity
of Dijkstra's algorithm and $U$ is the value of the feasible maximal flow,
which is bounded from above by $K_i$. The number of times that the successive
shortest path algorithm is executed is bounded from above by the chosen upper
bound on the makespan of an optimal solution, which in turn is bounded from
above by $O(|V|^3)$. Thus, each low-level search runs in polynomial time.

\subsection{Analysis of Properties}

We use the following properties to prove that CBM is correct, complete and
optimal.

\begin{pro} \label{p1} There is a correspondence between all feasible integer
  flows of $K_i$ units on the $T$-step time-extended network constructed for
  team $team_i$ and node $N$ and all paths for agents in team $team_i$ that a)
  obey the constraints of node $N$, b) move all agents from their start
  vertices to unique targets of their team, c) result in no collisions among
  agents in team $team_i$ and d) result in a team cost of team $team_i$ of at
  most $T$.
\end{pro}

\noindent {\em Reason.} The property holds by construction and can be proved
in a way similar to the one for the reduction of the (non-anonymous) MAPF
problem to the integer multi-commodity flow problem \cite{YuLav13ICRA}:

\noindent Left to right: Assume that a flow is given that has the stated
properties. Each unit flow from a source to a sink corresponds to a path
through the time-extended network from a unique source to a unique sink. Thus,
it can be converted to a path for an agent such that all such paths together
have the stated properties: Properties a and d hold by construction of the
time-extended network; Property b holds because a flow of $K_i$ units uses all
supplies and sinks; and Property c holds since the flows neither share
vertices nor edges.

\noindent Right to left: Assume that paths are given that have the stated
properties. If necessary, we extend the paths by letting the agents stay at
their targets. Each path now corresponds to a path through the time-extended
network (due to Properties a and d) from a unique source to a unique sink (due
to Property b) that does not share directed edges with the other such paths
(due to Property c). Thus, it can be converted to a unit flow such that all
such unit flows together respect the unit capacity constraints and form a flow
of $K_i$ units.

\begin{pro} \label{p2}
CBM generates only finitely many nodes.
\end{pro}

\noindent {\em Reason.} The constraint added on Line 18 to a child node is
different from the constraints of its parent node since the paths of its
parent node do not obey it. Overall, CBM creates a binary tree of finite depth
since only finitely many different vertex and edge constraints exist and thus
generates only finitely many nodes.

\begin{pro} \label{p3}
  Whenever CBM inserts a node into the priority queue, its key is finite.
\end{pro}

\noindent {\em Reason (by induction).} The property holds for the root
node. Assume that it holds for the parent node of some child node. The key of
the child node is the maximum of the key of the parent node and the team costs
of all teams for the paths of the child node. The key of the parent node is
finite due to the induction assumption. The low level returned the paths for
each team successfully at some point in time and all team costs are thus
finite as well.

\begin{pro} \label{p4}
  Whenever CBM chooses a node on Line 10 and the paths of the node have no
  colliding agents, then CBM correctly terminates with a solution with finite
  makespan of at most the value of its key.
\end{pro}

\noindent {\em Reason.} The key of the node is finite according to Property
\ref{p3}, and the makespan of its paths is at most the value of its key due to
Line 21.

\begin{pro} \label{p5}
  CBM chooses nodes on Line 10 in non-decreasing order of their keys.
\end{pro}

\noindent {\em Reason.} CBM performs a best-first search, and the key of a
parent node is most the key of any of its child nodes due to Line 21.

\begin{pro} \label{p6}
  The smallest makespan of any solution that obeys the constraints of a parent
  node is at most the smallest makespan of any solution that obeys the
  constraints of any of its child nodes.
\end{pro}

\noindent {\em Reason.} The solutions that obey the constraints of a parent
node are a superset of the solutions that obey the constraints of any of its
child nodes since the constraints of the parent node are a subset of the
constraints of any of its child nodes.

\begin{pro} \label{p7}
  The key of a node is at most the makespan of any solution that obeys its
  constraints.
\end{pro}

\noindent {\em Reason (by induction).} The property holds for the root
node. Assume that it holds for the parent node $N$ of any child node $N'$ and
that the paths for team $team_i$ were updated in the child node. Let $x$ be
the smallest makespan of any solution that obeys the constraints of the parent
node and $y$ be the smallest makespan of any solution that obeys the
constraints of the child node. We show in the following that the key of the
parent node and the team costs of all teams for the paths of the child node
are all at most $y$. Then, the key of the child node is also at most $y$ since
it is the maximum of all these quantities, and the property holds. First,
consider the key of the parent node. The key of the parent node is at most $x$
due to the induction assumption, which in turn is at most $y$ due to Property
\ref{p6}.  Second, consider any team different from team $team_i$. Then, the
team cost of the team for the paths of the child node is equal to the team
cost of the team for the paths of the parent node (since the paths were not
updated in the child node and are thus identical), which in turn is at most
the key of the parent node (since the key of the parent node is the maximum of
several quantities that include the team cost of the team for the paths of the
parent node), which in turn is at most $y$ (as shown directly above). Finally,
consider team $team_i$. When the low level finds new paths for team $team_i$,
it starts with $T$ being the key of the parent node, which is at most $y$ (as
shown directly above). Thus, the max-cost min-flow algorithm on a $T$-step
time-expanded network constructed for team $team_i$ and the child node finds a
feasible integer flow of $K_i$ units for $T \leq y$ since there exists a
solution with makespan $y$ that obeys the constraints of the child node. The
team cost of the corresponding paths for team $team_i$ is at most $T$ due to
Property \ref{p1}.

\begin{thm}
CBM is correct, complete and optimal.
\end{thm}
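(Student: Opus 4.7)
The plan is to decompose the claim into three subclaims — correctness (CBM only outputs valid TAPF solutions), completeness (CBM halts on every input, and returns a solution whenever one exists), and optimality (the returned solution has minimal makespan) — and to discharge each using the properties already established. Termination comes first and essentially for free: Property \ref{p2} bounds the total number of nodes ever generated, so the main while-loop executes only finitely often and CBM must exit either on Line 12 or on Line 23.

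For correctness I would inspect the invariants maintained by the high-level loop. Every node's paths obey its constraints, start at the correct vertices, end at unique targets of the correct team, and contain no intra-team collisions; these properties follow from the definition of \emph{Lowlevel} together with Property \ref{p1} and are preserved on Lines 19--22. When CBM returns on Line 12, \emph{Findcollisions} has additionally certified that there are no inter-team collisions, so the paths satisfy all five conditions in the TAPF definition and constitute a genuine solution; by Property \ref{p4} its makespan is finite and at most the key of that node.

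The hard part, as usual for CBS-style algorithms, is the joint completeness-and-optimality argument. Let $\pi^*$ be any fixed optimal TAPF solution with makespan $M^*$; the goal is to show that CBM eventually pops a collision-free node $N$ with key at most $M^*$. I would prove by induction on the high-level iteration the invariant: at every point during the search, the priority queue contains some node $N^*$ whose constraint set is obeyed by $\pi^*$. The base case is the root, whose constraint set is empty. For the inductive step, if the popped node $N$ equals $N^*$ and has collisions, then for the particular collision that CBM resolves, $\pi^*$ must violate at least one of the two symmetric candidate constraints (since by hypothesis $\pi^*$ obeys $N^*.constraints$ but exhibits no collision at that vertex/edge and time step), so exactly one of the two child branches adds a constraint still respected by $\pi^*$; on that branch the low-level search succeeds (Property \ref{p1} applied to $\pi^*$ restricted to the team in question), and the resulting child becomes the new witness $N^*$ in the queue. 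By Property \ref{p7}, $N^*.key \le M^*$ throughout.

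Putting the pieces together, by Property \ref{p5} CBM processes nodes in non-decreasing order of key, so it cannot pop any node with key strictly greater than $M^*$ before popping some node of key at most $M^*$. Since the tree is finite (Property \ref{p2}) and $N^*$ always remains in the queue with key at most $M^*$, the algorithm cannot exhaust the queue without first popping a node whose paths are collision-free. At that moment, Property \ref{p4} yields a solution of makespan at most its key, which is at most $M^*$; minimality of $M^*$ forces equality, proving both completeness and optimality. The one step that warrants the most care is the inductive construction of the child witness: one must verify that when $\pi^*$ obeys the parent's constraints but CBM finds a collision in the parent's paths, $\pi^*$ itself avoids that specific collision, so at least one of the two new vertex/edge constraints is consistent with $\pi^*$ — this is the standard CBS disjunctive-splitting argument, lifted from agents to teams.
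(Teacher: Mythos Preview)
Your proof is correct and follows essentially the same route as the paper: both arguments track a ``witness'' node in the priority queue whose constraints are obeyed by a fixed optimal solution, bound its key via Property~\ref{p7}, and combine Properties~\ref{p2} and~\ref{p5} to force CBM to pop a collision-free node with key at most the optimal makespan (the paper phrases this as a proof by contradiction, you as a maintained invariant, but the content is identical). Two small wording slips to fix: in the inductive step you write that $\pi^*$ must \emph{violate} at least one of the two new constraints, but you mean the opposite---since $\pi^*$ is collision-free it cannot violate both, hence it \emph{obeys} at least one; and ``exactly one of the two child branches'' should be ``at least one,'' since $\pi^*$ may obey both new constraints.
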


\begin{proof}
  Assume that no solution to a TAPF instance exists and CBM does not terminate
  unsuccessfully on Line 5. Then, whenever CBM chooses a node on Line 10, the
  paths of the node have colliding agents (because otherwise a solution would
  exist due to Property \ref{p4}). Thus, the priority queue eventually becomes
  empty and CBM terminates unsuccessfully on Line 23 since it generates only
  finitely many nodes due to Property \ref{p2}.

  Now assume that a solution exists and the makespan of an optimal solution is
  $x$. Assume, for a proof by contradiction, that CBM does not terminate with
  a solution with makespan $x$. Thus, whenever CBM chooses a node on Line 10
  with a key of at most $x$, the paths of the node have colliding agents
  (because otherwise CBM would correctly terminate with a solution with
  makespan at most $x$ due to Property \ref{p4}).  A node whose constraints
  the optimal solution obeys has a key of at most $x$ due to Property 7. The
  root note is such a node since the optimal solution trivially obeys the
  (empty) constraints of the root node. Whenever CBM chooses such a node on
  Line 10, the paths of the node have colliding agents (as shown directly
  above since its key is at most $x$). CBM thus generates the child nodes of
  this parent node, the constraints of at least one of which the optimal
  solution obeys and which CBM thus inserts into the priority queue with a key
  of at most $x$. Since CBM chooses nodes on Line 10 in non-decreasing order
  of their keys due to Property \ref{p5}. it chooses infinitely many nodes on
  Line 10 with keys of at most $x$, which is a contradiction with Property
  \ref{p2}.
\end{proof}

\section{Experiments}

In this section, we describe the results of four experiments on a 2.50 GHz
Intel Core i5-2450M PC with 6 GB RAM. First, we compare CBM to four other TAPF
or MAPF solvers. Second, we study how CBM scales with the number of agents in
each team. Third, we study how CBM scales with the number of agents.  Fourth,
we apply CBM to a simulated warehouse system.

\subsection{Experiment 1: Alternative Solvers}
\label{Comparisons}

\begin{table*}
\tiny
\centering
\caption{Results on 30$\times$30 4-neighbor grids with randomly blocked cells for different numbers of agents.}
\label{30X30}
\resizebox{\textwidth}{!}{
\begin{tabular}{|c|c|c|c|c|c|c|c|c|c|c|c|c|c|c|c|}
\hline
     & \multicolumn{3}{c|}{CBM (TAPF)}  & \multicolumn{3}{c|}{Unweighted CBM (TAPF)}                                                        & \multicolumn{3}{c|}{ILP (TAPF)}                                                                   & \multicolumn{3}{c|}{CBS (MAPF)}                                                           & \multicolumn{3}{c|}{ILP (MAPF)}                                                             \\ \hline
agts & mkspn & time & success & mkspn                     & time                        & success                     & mkspn                     & time                          & success                     & mkspn                     & time                        & success                     & mkspn                     & time                          & success                     \\ \hline
10   & 22.34    & 0.34 & 1       & {\color[HTML]{FE0000} 22.08} & {\color[HTML]{FE0000} 0.41} & {\color[HTML]{FE0000} 0.72} & 22.34                        & 18.24                         & 1                           & 36.36                        & 0.03                        & 1                           & 36.36                        & 8.66                          & 1                           \\
15   & 23.88    & 0.57 & 1       & {\color[HTML]{FE0000} 24.64} & {\color[HTML]{FE0000} 1.06} & {\color[HTML]{FE0000} 0.44} & 23.88                        & 35.44                         & 1                           & 37.32                        & 0.05                        & 1                           & 37.32                        & 15.31                         & 1                           \\
20   & 25.06    & 0.78 & 1       & {\color[HTML]{FE0000} 23.73} & {\color[HTML]{FE0000} 2.06} & {\color[HTML]{FE0000} 0.22} & {\color[HTML]{FE0000} 24.74} & {\color[HTML]{FE0000} 62.85}  & {\color[HTML]{FE0000} 0.94} & 39.84                        & 0.55                        & 1                           & 39.84                        & 30.30                         & 1                           \\
25   & 25.20    & 1.07 & 1       & {\color[HTML]{FE0000} 22.25} & {\color[HTML]{FE0000} 1.58} & {\color[HTML]{FE0000} 0.08} & {\color[HTML]{FE0000} 24.76} & {\color[HTML]{FE0000} 88.55}  & {\color[HTML]{FE0000} 0.82} & 40.44                        & 0.12                        & 1                           & 40.44                        & 43.76                         & 1                           \\
30   & 26.26    & 1.71 & 1       & {\color[HTML]{FE0000} 31}    & {\color[HTML]{FE0000} 6.73} & {\color[HTML]{FE0000} 0.02} & {\color[HTML]{FE0000} 24.70} & {\color[HTML]{FE0000} 108.75} & {\color[HTML]{FE0000} 0.66} & 41.92                        & 0.21                        & 1                           & 41.92                        & 65.86                         & 1                           \\
35   & 26.50    & 1.92 & 1       & {\color[HTML]{FE0000} -}     & {\color[HTML]{FE0000} -}    & {\color[HTML]{FE0000} 0}    & {\color[HTML]{FE0000} 24.65} & {\color[HTML]{FE0000} 121.99} & {\color[HTML]{FE0000} 0.46} & 42.50                        & 1.55                        & 1                           & 42.50                        & 81.83                         & 1                           \\
40   & 27.60    & 2.95 & 1       & {\color[HTML]{FE0000} -}     & {\color[HTML]{FE0000} -}    & {\color[HTML]{FE0000} 0}    & {\color[HTML]{FE0000} 25.29} & {\color[HTML]{FE0000} 152.98} & {\color[HTML]{FE0000} 0.14} & {\color[HTML]{FE0000} 43.69} & {\color[HTML]{FE0000} 4.82} & {\color[HTML]{FE0000} 0.98} & {\color[HTML]{FE0000} 43.53} & {\color[HTML]{FE0000} 115.53} & {\color[HTML]{FE0000} 0.98} \\
45   & 27.20    & 3.66 & 1       & {\color[HTML]{FE0000} -}     & {\color[HTML]{FE0000} -}    & {\color[HTML]{FE0000} 0}    & {\color[HTML]{FE0000} 24.29} & {\color[HTML]{FE0000} 161.52} & {\color[HTML]{FE0000} 0.14} & {\color[HTML]{FE0000} 42.41} & {\color[HTML]{FE0000} 2.60} & {\color[HTML]{FE0000} 0.92} & {\color[HTML]{FE0000} 42.37} & {\color[HTML]{FE0000} 133.47} & {\color[HTML]{FE0000} 0.98} \\
50   & 27.90    & 5.32 & 1       & {\color[HTML]{FE0000} -}     & {\color[HTML]{FE0000} -}    & {\color[HTML]{FE0000} 0}    & {\color[HTML]{FE0000} 24.50} & {\color[HTML]{FE0000} 161.95} & {\color[HTML]{FE0000} 0.04} & {\color[HTML]{FE0000} 43.96} & {\color[HTML]{FE0000} 7.95} & {\color[HTML]{FE0000} 0.96} & {\color[HTML]{FE0000} 42.86} & {\color[HTML]{FE0000} 166.99} & {\color[HTML]{FE0000} 0.86} \\ \hline
\end{tabular}
}
\end{table*}

We compare our optimal TAPF solver CBM to two optimal (non-anonymous) MAPF
solvers, namely a) the CBS solver provided by the authors of
\cite{DBLP:journals/ai/SharonSFS15} and b) the ILP-based MAPF solver provided
by the authors of~\cite{YuLav13ICRA}, and two optimal TAPF solvers, namely a)
an unweighted version of CBM that runs the polynomial-time max-flow algorithm
on a time-expanded network without edge weights (instead of the min-cost
max-flow algorithm on a time-expanded network with edge weights) on the low
level and b) an ILP-based TAPF solver (based on the ILP-based MAPF solver)
that casts a TAPF instance as a series of integer multi-commodity flow
problems as described in Section \ref{TAPF and Network Flow}, each of which it
models as an ILP and solves with the ILP solver Gurobi 6.0 (www.gurobi.com).

For Experiment 1, each team consists of 5 agents but the number of agents
varies from 10 to 50, resulting in $2\ldots10$ teams. For each number of
agents, we generate 50 TAPF instances from the same 50 $30 \times 30$
4-neighbor grids with $10\%$ randomly blocked cells by randomly assigning
unique start cells to agents and unique targets to teams. For the MAPF
solvers, we convert each TAPF instance to a (non-anonymous) MAPF instance by
randomly assigning the agents in each team to unique targets of the same team.

Table~\ref{30X30} shows the success rates as well as the means of the
makespans and running times (in seconds) over the instances that are solved
within a time limit of 5 minutes each. Red entries indicate that some
instances are not solved within the time limit, while dashed entries indicate
that all instances are not solved within the time limit. CBM solves all TAPF
instances within the time limit.

\subsubsection{CBS and the ILP-Based MAPF Solver}

Both MAPF solvers solve most of the MAPF instances within the time limit. The
running times of CBM and CBS are similar because, on the low level, both the
min-cost max-flow algorithm of CBM (for a single team) and the A* algorithm of
CBS (for a single agent) are fast. Optimal solutions of the TAPF instances
have smaller makespans than optimal solutions of the MAPF instances due to the
freedom of assigning agents to targets for the TAPF instances rather than
assigning them randomly for the MAPF instances.

\subsubsection{Unweighted CBM}
\label{Comparison to Unweighted CBM}

Unweighted CBM solves less than half of all TAPF instances within the time
limit if the number of agents is larger than 10 due to the large number of
collisions among agents in different teams produced by the max-flow algorithm
on the low level in tight spaces with many agents, which results in a large
number of node expansions by CBS on the high level. We conclude that biasing
the search on the low level is important for CBM to solve all TAPF instances
within the time limit.

\subsubsection{ILP-Based TAPF Solver}

The ILP-based TAPF solver solves less than half of all TAPF instances within
the time limit if the number of agents is larger than 30, and its running time
is much larger than that of CBM. The success rates and running times of the
the ILP-based TAPF solver tend to be larger than those of the ILP-based MAPF
solver even though the ILP formulation of a TAPF instance has fewer variables
than that of the corresponding MAPF instance (since the number of commodity
types equals the number of teams for the TAPF instance but the number of
agents for the MAPF instance). However, the variables in the ILP formulation
of the MAPF instance are Boolean variables while those in the ILP formulation
of the TAPF instance are integer variables. Furthermore, the ILP-based MAPF solver uses the maximum over all agents of the length of a shortest path of each agent as the starting value of $T$ for the time-expanded network while the ILP-based TAPF solver solves the LP formulation of the max-flow problem that finds paths for each team (ignoring other teams) and then uses the maximum over all teams of the team costs of the paths as the starting value $T$ for
the time-expanded network.

\subsection{Experiment 2: Team Size}

\begin{figure}
\subfigure {\raisebox{25pt}{
\tiny
  \begin{tabular}[b]{|>{\hspace{-5pt}}c<{\hspace{-5pt}}|>{\hspace{-5pt}}c<{\hspace{-5pt}}|>{\hspace{-5pt}}c<{\hspace{-5pt}}|}
\hline
$K$    & mkspn & time    \\ \hline
2      & 11.1     & 2.75 \\
4      & 15.9     & 4.30 \\
5      & 17.12    & 2.56  \\
10     & 23.04    & 5.53 \\
20     & 29.32    & 6.06 \\
25     & 30.88    & 6.44  \\
50     & 39.76    & 12.15 \\ \hline
\end{tabular}
}
}\hspace{-0.5cm}\hfill
\subfigure {
  \includegraphics[width=0.78\columnwidth]{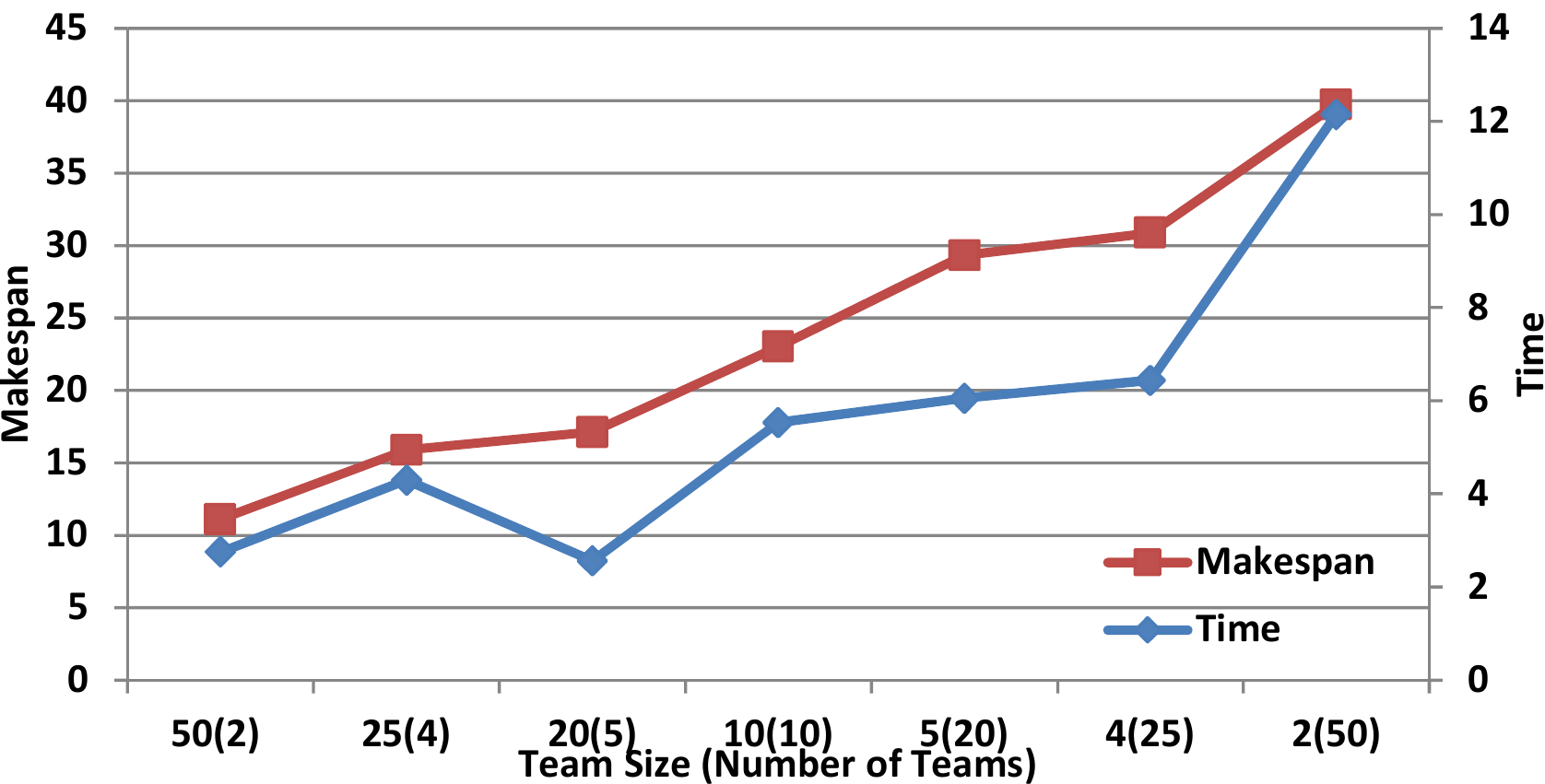}
}  \caption{Results on 30$\times$30 4-neighbor grids with randomly blocked
  cells for different team sizes. ($K$ is the number of teams.)}\label{100agents}
\end{figure}

For Experiment 2, there are 100 agents but the number of agents in a team
(team size) varies from 50 to 2, resulting in $K=2\ldots50$ teams. For each
team size, we generate 50 TAPF instances as described before.

Table~\ref{100agents} shows the means of the makespans and running times (in
seconds) over the instances that are solved within a time limit of 5 minutes
each. CBM solves all TAPF instances within the time limit. For large team
sizes and thus small numbers of teams, the makespans are small because CBM has
more freedom to assign agents to targets. The running times are also small
because the min-cost max-flow algorithm on the low level is fast even for
large numbers of agents while CBS on the high level is fast because it needs
to resolve collisions among agents in different teams but there are only a
small number of teams. Thus, it is advantageous for teams to consist of as
many agents as possible.

\subsection{Experiment 3: Number of Agents}

\begin{figure}
\subfigure {\raisebox{15pt}{
\tiny
\begin{tabular}[b]{|>{\hspace{-5.5pt}}c<{\hspace{-5.5pt}}|>{\hspace{-5.5pt}}c<{\hspace{-5.5pt}}|>{\hspace{-5.5pt}}c<{\hspace{-5.5pt}}|>{\hspace{-5.5pt}}c<{\hspace{-5.5pt}}|}
\hline
agts & mkspn & time   & success \\ \hline
100    & 30.10    & 6.14   & 1       \\
150    & 29.67    & 8.10   & 0.96    \\
200    & 32.09    & 12.97  & 0.94    \\
250    & 31.05    & 15.56  & 0.86    \\
300    & 32.09    & 25.42  & 0.7     \\
350    & 33.03    & 32.59  & 0.6     \\
400    & 34.19    & 59.69  & 0.42    \\
450    & 35.80    & 101.47 & 0.1     \\ \hline
\end{tabular}
}
}\hspace{-0.5cm}\hfill
\subfigure {
  \includegraphics[width=0.67\columnwidth]{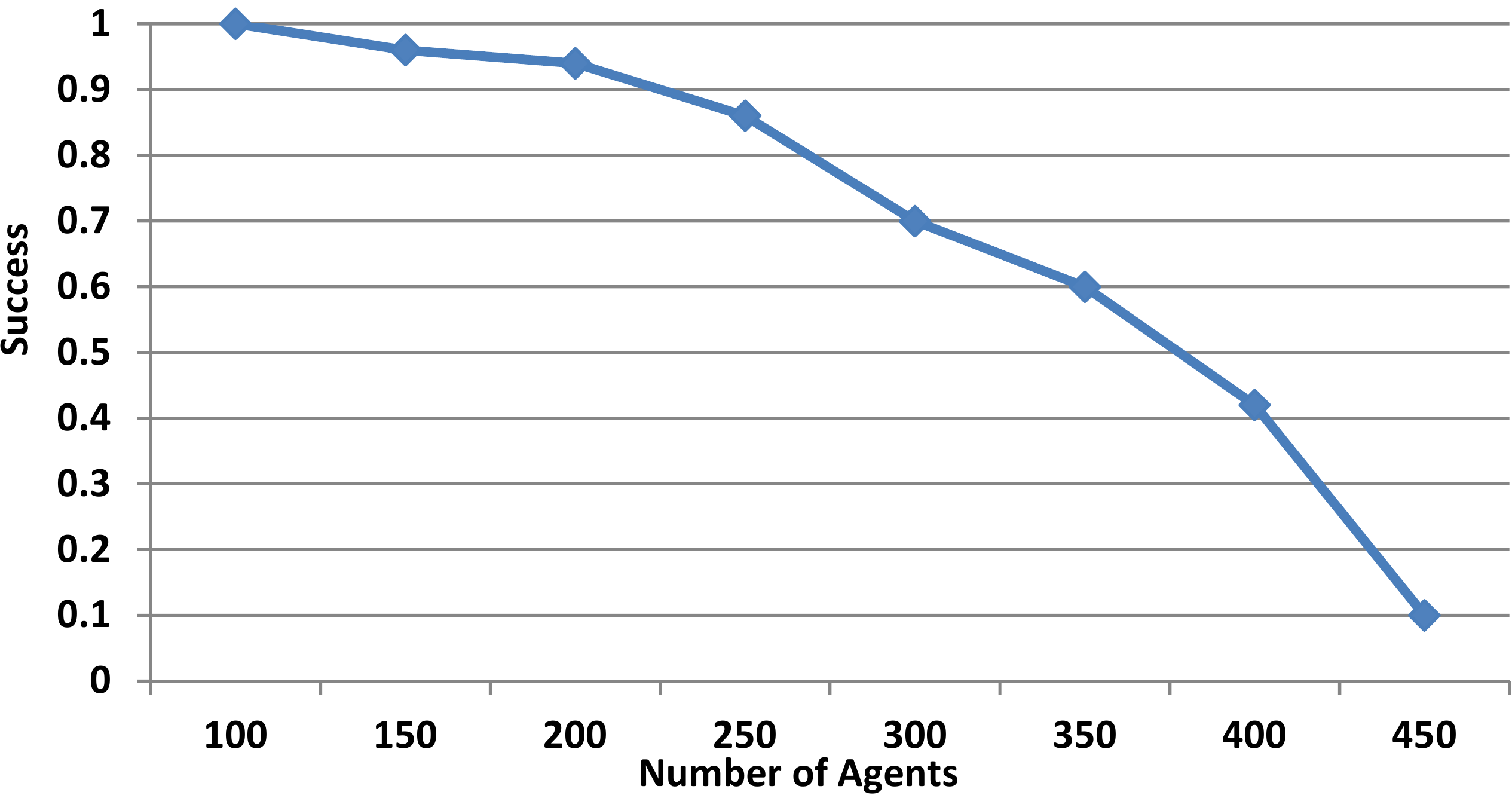}
}\caption{Results on 30$\times$30 4-neighbor grids with randomly blocked cells
  for different numbers of agents.}\label{5perteam}
\end{figure}

For Experiment 3, each team consists of 5 agents but the number of agents
varies from 100 to 450, resulting in $20\ldots90$ teams. For each number of
agents, we generate 50 TAPF instances as described before.

Table~\ref{100agents} shows the success rates as well as the means of the
makespans and running times (in seconds) over the instances that are solved
within a time limit of 5 minutes each. For 250 agents or fewer, the success
rate is larger than 85\%. Current (non-anonymous) MAPF algorithm are not able
to handle instances of this scale. As the number of agents increases, the
success rates decrease and the makespans and running times increase due to the
increasing number of collisions among agents in different teams produced by
the min-cost max-flow algorithm on the low level. For 450 agents, for example,
more than half of the unblocked cells are occupied by agents and thus many
start cells of agents are also targets for other agents.

\subsection{Experiment 4: Warehouse System}

\begin{figure}
  \centering
  \includegraphics[width=\columnwidth]{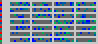}
  \caption{A randomly generated Kiva instance. CBM returns a solution with
    makespan 65 in 25.97 seconds. The length of the shortest path for both
    green drive units near the top-right corner to the green inventory station
    in the bottom-left corner is 64. Thus, at least one of them has to wait
    for at least one time step to enter the green inventory station and the
    solution found by CBM is optimal.}
  \label{random_kiva}
\end{figure}

We now apply CBM to a simulated {\bf Kiva (now: Amazon Robotics) warehouse
  system} \cite{kiva}. Figure~\ref{kiva} shows a typical grid layout with {\em
  inventory stations} on the left side and {\em storage locations} in the {\em
  storage area} to the right of the inventory stations. Each inventory station
has an entrance (purple cells) and an exit (pink cells). Each storage location
(green cell) can store one {\em inventory pod}. Each inventory pod consists of
a stack of trays, each of which holds bins with products. The autonomous
warehouse robots are called {\em drive units}. Each drive unit is capable of
picking up, carrying and putting down one inventory pod at a time.  As a team,
the drive units need to move inventory pods all the way from their storage
locations to the inventory stations that need the products they store (to ship
them to customers) and then back to the same or different empty storage
locations. After a drive unit enters an inventory station, the requested
product is removed from its inventory pod by a worker. Once drive units have
delivered all requested products for one shipment to the same inventory
station, the worker prepares the shipment to the customer.

Figure~\ref{random_kiva} shows a randomly generated Kiva instance. The light
grey cells are free space. The dark grey cells are storage locations occupied
by inventory pods and thus blocked. There are 7 inventory stations on the left
side. The red cells are their exits, and the other 7 cells with graduated
blue-green colors are their entrances. Drive units can enter and leave the
inventory stations one at a time through their entrances and exits,
respectively. The cells with graduated blue-green colors in the storage area
are occupied by drive units. Each drive unit needs to carry the inventory pod
in its current cell to the inventory station of the same color.

For Experiment 4, we generate 50 TAPF instances. Each instance has 420 drive
units. 210 ``incoming'' drive units start at randomly determined storage
locations: 30 drive units each need to move their inventory pods to the 7
inventory stations. In order to create difficult Kiva instances, we generate
the start cells of these drive units randomly among all storage locations
rather than cluster them according to their target inventory stations. 210
``outgoing'' drive units start at the inventory stations: 30 drive units each
need to move their inventory pods from the 7 inventory stations to the storage
locations vacated by the incoming drive units. The task is to assign the 210
outgoing drive units to the vacated storage locations and plan collision-free
paths for all 420 drive units in a way such that the makespan is minimized.
The incoming drive units that have the same inventory station as target are a
team (since they can arrive at the inventory station in any order), and all
outgoing drive units are a team.

So far, we have assumed that, for any TAPF instance, all start vertices are
unique, all targets are unique and each of the teams is given the same number
of targets as there are agents in the team but these assumptions are not
necessarily satisfied here.  1) The outgoing drive units that start at the
same inventory station all start at its exit. In this case, we change the
construction of the $T$-step time-extended network for the team of outgoing
drive units so that there is a supply of one unit at vertex $v_t^{out} \in
\mathcal{V'}$ for all $t = 0 \ldots 29$ and all vertices $v \in V$ that
correspond to exits of inventory stations. This construction forces the
outgoing drive units that start at the same inventory station to leave it one
after the other during the first 30 time steps. No further changes are
necessary. 2) The incoming drive units that have the same inventory station as
target all end at its entrance. In this case, we change the construction of
the $T$-step time-extended network for each team of incoming drive units so
that there is an auxiliary vertex with a demand of 30 units and vertex
$v_t^{out} \in \mathcal{V'}$ for all $t = 0 \ldots T$ is connected to the
auxiliary vertex with an edge with unit capacity and zero edge weight, where
$v \in V$ corresponds to the entrance of the inventory station. This
construction forces the incoming drive units to enter the inventory station at
different time steps. No further changes are necessary.  3) There could be
more empty storage locations than outgoing drive units. In this case, no
changes are necessary.

CBM finds solutions for 40 of the 50 Kiva instances within a time limit of 5
minutes each, yielding a success rate of 80\%. The mean of the makespan over
the solved Kiva instances is 63.73, and the mean of the running time is 91.61
seconds. Since early Kiva warehouse systems typically had about 200 drive
units in more spacious (and thus less challenging) warehouses and even
bounded-suboptimal (non-anonymous) MAPF algorithms that were specifically
designed for simulated Kiva warehouse systems do not scale well to hundreds of
agents \cite{DBLP:conf/socs/CohenUK15}, we conclude that CBM is a promising
TAPF algorithm for applications of real-world scale.

\section{Conclusions}

In this paper, we studied the TAPF (combined target-assignment and
path-finding) problem for teams of agents in known terrain to bridge the gap
between the extreme cases of anonymous and non-anonymous MAPF problems, as
required by many applications. We presented CBM, a hierarchical algorithm that
is correct, complete and optimal for solving the TAPF problem. CBM outperforms
(non-anonymous) MAPF algorithms in terms of both scalability and solution
quality in our experiments. It also generalizes to applications with dozens of
teams and hundreds of agents, which demonstrates its promise.

\section{Acknowledgments}

We thank Jingjin Yu for making the code of their ILP-based MAPF solver and
Guni Sharon for making the code of their CBS solver available to us. Our
research was supported by NASA via Stinger Ghaffarian Technologies as well as
NSF under grant numbers 1409987 and 1319966 and a MURI under grant number
N00014-09-1-1031. The views and conclusions contained in this document are
those of the authors and should not be interpreted as representing the
official policies, either expressed or implied, of the sponsoring
organizations, agencies or the U.S. government.

\bibliographystyle{abbrv}
\bibliography{references}

\end{document}